\documentclass[sigconf]{acmart}
\AtBeginDocument{%
  }

\setcopyright{acmlicensed}
\copyrightyear{2018}
\acmYear{2018}
\acmDOI{XXXXXXX.XXXXXXX}
\acmConference[Conference acronym 'XX]{Make sure to enter the correct
  conference title from your rights confirmation email}{June 03--05,
  2018}{Woodstock, NY}
\acmISBN{978-1-4503-XXXX-X/2018/06}

\usepackage{pifont}   

\usepackage{scalefnt}
\usepackage{wrapfig}
\usepackage{tabularx} 
\usepackage{colortbl}
\usepackage{xcolor,bm}
\usepackage{bm}
\usepackage{enumitem}

\usepackage{amsmath}
\usepackage{xspace}

\providecommand{\Ntr}{\mathcal{N}_{\mathrm{tr}}}
\providecommand{\Nval}{\mathcal{N}_{\mathrm{val}}}
\providecommand{\Ncal}{\mathcal{N}_{\mathrm{cal}}}

\DeclareMathOperator*{\argmin}{arg\,min}
\providecommand{\norm}[1]{\ensuremath{\left\lVert #1 \right\rVert}}
\providecommand{\sg}{\operatorname{sg}}
\providecommand{\ind}{\mathbf{1}}

\definecolor{orbitblue}{HTML}{E4EFF8}

\begin{document}

\newcommand{\method}{{\fontfamily{lmtt}\selectfont \textbf{CLOSURE}}\xspace}

\title{CLOSURE: Cross-Channel Closure for Multivariate Time Series Anomaly Detection}


\author{Fan Xu}
\affiliation{%
  \institution{University of Science and Technology of China}
  \city{Hefei}
  \state{Anhui}
  \country{China}
}
\email{markxu@mail.ustc.edu.cn}

\author{Hao Wu}
\affiliation{%
  \institution{Tsinghua University}
  \city{Beijing}
  \country{China}
}
\email{wu-h25@mails.tsinghua.edu.cn}

\author{Zhi Song}
\affiliation{%
  \institution{City University of Hong Kong}
  \city{Hongkong}
  \country{China}
}
\email{songzhics@gmail.co}

\author{Hanbo Huang}
\affiliation{%
  \institution{Shanghai Jiao Tong University}
  \city{Shanghai}
  \country{China}
}
\email{hhuang417@sjtu.edu.cn}

\author{Wenzhuo Ma}
\affiliation{%
  \institution{Tianjin University}
  \city{Tianjin}
  \country{China}
}
\email{3353147727j@gmail.com}

\author{Nan Wang}
\affiliation{%
  \institution{Beijing Jiaotong University}
  \city{Beijing}
  \country{China}
}
\email{wangnanbjtu@bjtu.edu.cn}

\author{Wei Gong}
\authornotemark[2]
\affiliation{%
  \institution{University of Science and Technology of China}
  \city{Hefei}
  \state{Anhui}
  \country{China}
}
\email{weigong@ustc.edu.cn}

\author{Xibin Zhao}
\authornotemark[2]
\affiliation{%
  \institution{Tsinghua University}
  \city{Beijing}
  \country{China}
}
\email{zxb@tsinghua.edu.cn}

\renewcommand{\shortauthors}{Fan Xu et al.}

\renewcommand{\shortauthors}{Trovato et al.}

\begin{abstract}
Multivariate time series produced by industrial installations and server clusters are under continuous monitoring, and anomaly detection is the first line of defense before a local fault escalates into a system-wide failure. In the absence of labels, the dominant paradigm trains a reconstruction model on normal data and scores the residual to reach a verdict. Recent work has steadily enriched the expressiveness of the recovery map through Transformers, diffusion models, and cross-scale associations. A residual, however, only certifies that an observation can be recovered; it does not certify that the variables still explain one another as they do under normal operation. A coordinated yet unfamiliar operating transition inflates the residual without breaking any relationship, whereas a sensor drifting away from its peers may leave the residual silent. Methods that fold inter-variable dependencies into the reconstruction objective inherit this ambiguity, because their relational evidence is ultimately supervised by the same recovery loss. We therefore propose \method{}, which evaluates recoverability and cross-channel consistency as two normal-data prediction tasks learned in separation. The recoverability task is solved by a diffusion reconstructor conditioned on a discrete memory of normal temporal bases. The consistency task is solved by a relation-closure operator that predicts every channel exclusively from the others, where the target's own trajectory is withheld and sparse signed gates decide which peers are actually used. The two scores are reconciled only through calibration on normal data. Extensive experiments show that \method{} consistently outperforms recent reconstruction-, frequency-, and topology-based detectors on public benchmark streams, confirming the value of examining consistency alongside recoverability.

\end{abstract}

\begin{CCSXML}
<ccs2012>\usepackage{amsmath}
   <concept>
       <concept_id>10010147.10010257.10010258.10010260.10010229</concept_id>
       <concept_desc>Computing methodologies~Anomaly detection</concept_desc>
       <concept_significance>500</concept_significance>
       </concept>
   <concept>
       <concept_id>10010147.10010257.10010293.10010319</concept_id>
       <concept_desc>Computing methodologies~Learning latent representations</concept_desc>
       <concept_significance>500</concept_significance>
       </concept>
   <concept>
       <concept_id>10002951.10003227.10003351</concept_id>
       <concept_desc>Information systems~Data mining</concept_desc>
       <concept_significance>500</concept_significance>
       </concept>
   <concept>
       <concept_id>10010147.10010257.10010293.10010294</concept_id>
       <concept_desc>Computing methodologies~Neural networks</concept_desc>
       <concept_significance>500</concept_significance>
       </concept>
 </ccs2012>
\end{CCSXML}

\ccsdesc[500]{Computing methodologies~Anomaly detection}
\ccsdesc[500]{Computing methodologies~Learning latent representations}
\ccsdesc[500]{Computing methodologies~Neural networks}

\keywords{Multivariate Time Series, Anomaly Detection}


\maketitle

\section{Introduction}
\label{sec:introduction}

Multivariate time series produced by industrial installations and server clusters are under continuous monitoring, and anomaly detection is the first line of defense before a local fault escalates into a system-wide failure \citep{tuli2022tranad,su2019robust}. Labeled failures are scarce and never enumerate the ways a system can break, so detectors are typically fitted to normal observations alone. Reconstruction supplies the canonical objective for this setting: learn to reproduce normal windows, then read the residual as evidence of departure. Recurrent latent models and adversarial autoencoders established the paradigm \citep{su2019robust,zhou2019beatgan}; Transformers widened the temporal context \citep{tuli2022tranad}; diffusion models recover observations through denoising and imputation \citep{zhang2022grelen,chen2023imdiffusion,xiao2023imputation,physdiff2025}; and cross-scale reconstruction with shared cross-window context further enriches what a model can recover \citep{crossad2025}. These advances strengthen the representation of normal behavior. Whether they strengthen detection depends on a separate question: how faithfully reconstruction difficulty tracks abnormality.

\begin{figure*}[t]
    \centering
    \includegraphics[width=\textwidth]{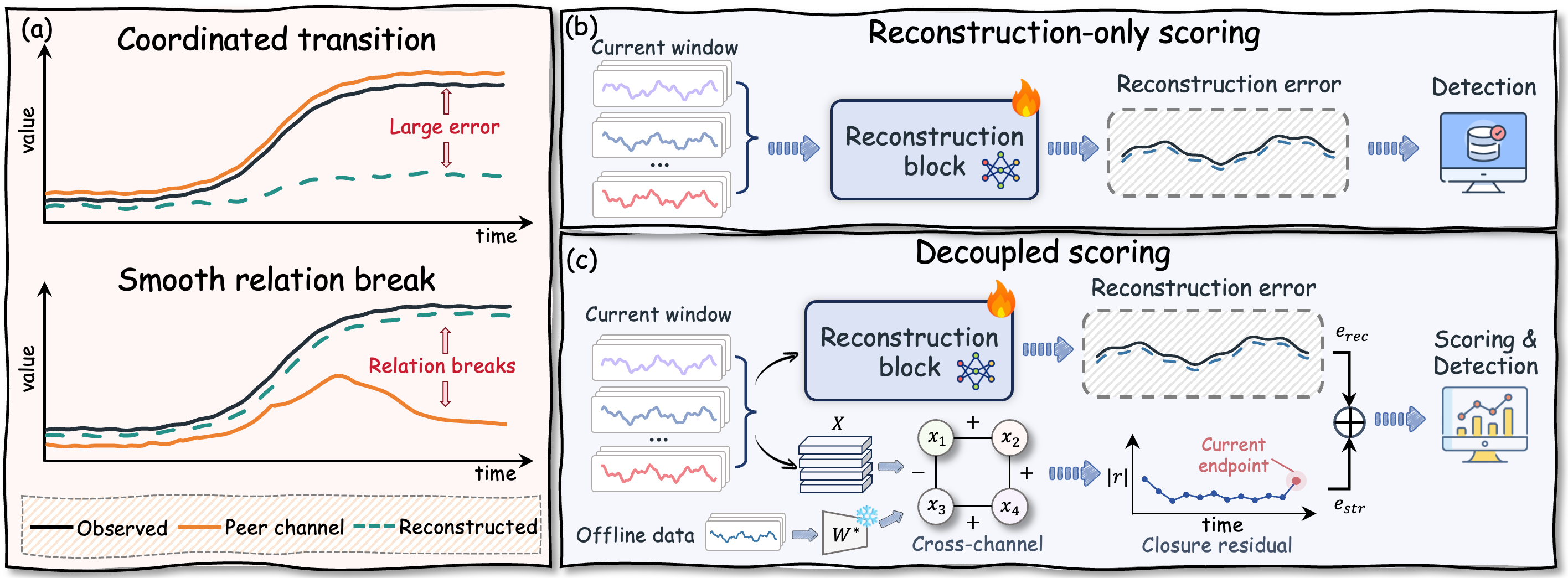}
    \vspace{-10pt}
    \caption{\textbf{Reconstructable does not necessarily mean consistent.} (a) A coordinated transition can be hard to reconstruct while every relationship stays intact, whereas a smooth relation break can remain easy to reconstruct. (b) Reconstruction-only scoring compares an observation with its recovery. (c) \method additionally compares the same observation with predictions issued by its peer channels. The depicted $W^\star$ denotes frozen connectivity.}
    \vspace{-5pt}
    \label{fig:overview}
\end{figure*}

Reconstruction difficulty and abnormality need not coincide. Consider a healthy operating transition in which several sensors change together while preserving their usual coordination. The trajectory may be unfamiliar to the reconstructor and yield a large residual even though no channel relationship has been violated. The converse is equally plausible. A drifting sensor may follow a smooth curve that the model reproduces accurately, while it quietly stops tracking the channels with which it normally varies. Figure~\ref{fig:overview} contrasts these possibilities. They separate two properties of a single window, \emph{recoverability} and \emph{cross-channel consistency}, and expose an ambiguity in the residual: agreement with a learned reconstructor does not by itself establish agreement with normal relations. 

Dependencies already play an important role in anomaly detection. Learned sensor graphs and inter-metric embeddings model interactions among variables \citep{deng2021graph,li2021multivariate, xu2024revisiting}. CATCH learns channel interactions inside frequency patches \citep{wu2024catch}, and ScatterAD couples temporal and topological views \citep{scatterad2025}. A related line uses temporal association discrepancies as detection evidence \citep{xu2021anomaly,yang2023dcdetector}. These methods establish the value of relational evidence. Our question is therefore not whether dependencies matter, but where their supervision comes from. When inter-variable dependencies are folded into the reconstruction objective, the relational evidence is ultimately shaped by the same recovery loss, and the resulting score inherits the ambiguity above. Such a reference also needs a criterion of its own for deciding which relations enter it. A strong association within one normal interval may reflect a temporary operating condition, and a merely strong correlation does not identify the peers that a nonlinear predictor should actually use. Turning these associations directly into prediction weights therefore grants uncertain relations an influence on detection before their usefulness has been measured. We instead ask how to construct a cross-channel reference whose supervision, and whose selection of relations, are both established by a prediction task separate from reconstruction.

According to these, we introduce \method, a detector that evaluates recoverability and cross-channel consistency as two normal-data prediction tasks learned in separation. The consistency task is carried by a \emph{relation-closure operator}. Each target receives nonlinear messages from its peers through learned signed coefficients and sparse gates, and never from itself; the disagreement between the observed channels and their peer predictions forms a \emph{closure residual} \citep{batson2019noise2self}. Because a sustained departure can raise a window's residual background together with its endpoint, we score the endpoint relative to that background, which makes the criterion sensitive to a localized change in coordination. The recoverability task is carried by a diffusion reconstructor conditioned on a discrete memory of normal temporal bases \citep{physdiff2025}. The relation operator is neither an input to this reconstructor nor a target of its training loss, and the two paths meet only after their scores are calibrated on normal data. This separation lets us hold the reconstructor fixed and vary the relational criterion alone, so that a change in detection is attributable to relation learning rather than to improved recovery.
In summary, we make the following key contributions:

\vspace{-5pt}
\begin{itemize}[leftmargin=2em,itemsep=-0.1em]
\item[\ding{182}] \textbf{Decoupled scoring.} We formulate a detector that evaluates temporal recoverability alongside cross-channel consistency, keeping the relational reference outside the reconstruction objective and combining the two paths only at the decision stage.

\item[\ding{183}] \textbf{Learned sparse closure.} We instantiate the relational criterion with signed coefficients, per-edge gates, and source-only nonlinear messages, and show that row normalization erases absolute relation strength, motivating a separate gate parameter for selection.

\item[\ding{184}] \textbf{Controlled evaluation.} We prove that a target-only perturbation enters the closure residual unattenuated, and experiments with a frozen reconstructor show that learned relational evidence improves affiliation F1 beyond reconstruction alone.
\end{itemize}

\section{Related Work}
\label{sec:related}

\subsection{Reconstruction-Based Detection} 
Reconstruction-based detectors learn normal dynamics and score deviations through reconstruction error or likelihood. Recurrent models, adversarial autoencoders, and Transformers provide different temporal representations for this task \citep{su2019robust,zhou2019beatgan,tuli2022tranad, xu2025advanced}. Diffusion approaches formulate recovery as conditional imputation \citep{chen2023imdiffusion,xiao2023imputation}. PhysDiff \citep{physdiff2025} conditions corruption and attention on decomposed temporal bases, while CrossAD \citep{crossad2025} reconstructs fine-scale observations from coarser views and shares context across windows. These designs enrich what a model can recover the residual. however, they still report agreement with a single recovery map.

\subsection{Dependency Modeling} 
Dependencies can guide signal modeling or become detection targets. GDN \citep{deng2021graph} and InterFusion \citep{li2021multivariate} encode channel interactions through sensor graphs and hierarchical representations, respectively. CATCH \citep{wu2024catch} learns frequency-dependent channel masks, whereas MSCRED \citep{zhang2019mscred} reconstructs multiscale inter-sensor signature matrices and scores their residuals. Another line examines temporal associations through association discrepancy and dual-attention contrastive learning \citep{xu2021anomaly,audibert2020usad,park2020learning,yang2023dcdetector}. ScatterAD \citep{scatterad2025} models temporal-node scattering and aligns temporal and topological representations. Across these designs, the relational evidence is produced within the reconstruction or representation objective, so its reference and the recovery map are shaped by the same loss.

\subsection{Normality Constraints and Scoring} 
Normal references and constraints on the recovery map address the risk of accurately reconstructing anomalies. MemAE \citep{gong2019memorizing} retrieves normal latent prototypes for decoding, and vector quantization provides a discrete representation for such prototypes \citep{van2017neural}. MEMTO \citep{song2023memto} extends memory-guided reconstruction with latent-memory deviations that weight reconstruction errors, demonstrating that a normal reference can also supply an additional scoring criterion. IGAD \citep{sun2025idempotent} instead imposes idempotency and tightness losses on the reconstruction map. These constraints act on the recovery map or on the representation it decodes from, and their criterion is still read from how well the observation is reproduced. 

These methods use learned dynamics, dependencies, or normal prototypes to define detection evidence. \method instead complements reconstruction with a relational reference that is learned separately and never enters the reconstruction objective. Its signed closure operator evaluates whether observed channels remain mutually explainable when each target is withheld from its own prediction, rather than reconstructing a relation matrix or measuring distance to a latent prototype.
\section{Methodology}
\label{sec:method}

\textbf{Problem setup.}
Let $x_t\in\mathbb R^D$ contain the retained channels of a multivariate series and $X_t=[x_{t-L+1},\ldots,x_t]^\top\in\mathbb R^{L\times D}$ its length-$L$ window. We seek a score $a_t$ for the \emph{observed endpoint}, using normal data for fitting and calibration. The normal sequence is divided chronologically into training $\Ntr$, validation $\Nval$, and calibration $\Ncal$. Windows stay within their assigned segments, with $L-1$ unused timestamps between segments. All inputs use training-fitted scaling.


\textbf{Overview.}
Figure~\ref{fig:method} follows one window through two paths that answer different questions. \ding{182} \textit{\textbf{Temporal reconstruction}} measures how accurately a learned model recovers $X_t$ given normal temporal context. \ding{183} \textit{\textbf{Relational closure}} measures whether the channels of $X_t$ remain mutually explainable under a learned normal relation reference. Three components are fitted separately: a discrete basis codec, the reconstructor, and the relation predictor. The relation path receives neither memory codes nor reconstruction outputs, and contributes no gradient to them. Both paths are frozen at detection and meet only through normal-calibrated scoring.

\begin{figure*}[t]
    \centering
    \includegraphics[width=1.0\textwidth]{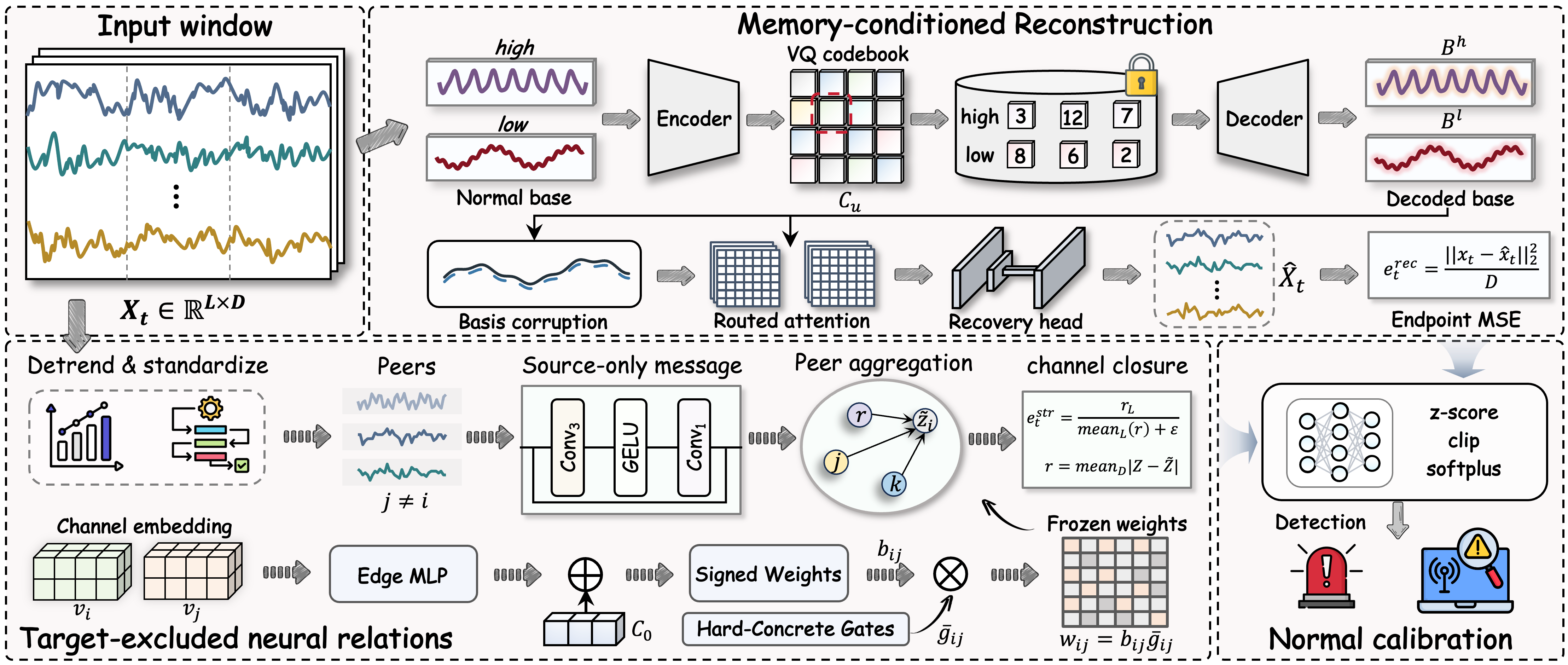}
    \vspace{-5pt}
    \caption{\textbf{Overview of \method.} Paired codebooks store normal temporal bases as code indices; decoded bases condition both corruption and attention. Learned signed coefficients, per-edge gates, and source-only messages predict each target from its peers, never from itself. Independently calibrated scores are combined for detection.}
    \label{fig:method}
\end{figure*}

\subsection{Memory-Conditioned Temporal Reconstruction}
\label{sec:recoverability}

The reconstruction path measures how accurately an entire window can be recovered from normal temporal context, and reports the resulting discrepancy at the observed endpoint. We introduce the basis-conditioned diffusion backbone and represent the normal bases with learned discrete codecs, so that recovery is conditioned on stored normal patterns rather than on the test window itself.

\noindent\textbf{A discrete normal basis memory.}
High- and low-frequency bases are extracted from normal training windows by adaptive multicomponent decomposition. For band $u\in\{h,l\}$, channel-wise bases are split into length-$p$ patches, normalized by training-fitted scales, and quantized against a codebook $\mathcal C_u=\{c^u_k\}_{k=1}^{K}$ through a straight-through estimator~\citep{van2017neural}:
\begin{equation}
\begin{gathered}
z^u=E_u(b),
\quad
q_u(b)=\argmin_{1\le k\le K}\norm{z^u-c^u_k}_2^2,\\
\widehat b=D_u\bigl(z^u+\sg[c^u_{q_u(b)}-z^u]\bigr).
\end{gathered}
\label{eq:memory-quantization}
\end{equation}
Each band owns a codec shared across channels. A memory slot stores the paired code-index sequences of one normal template; reading a slot decodes both bands and restores channel scales to give $\mathcal B=(B^h,B^l)$ with $B^u\in\mathbb R^{L\times D}$. The shared slot index preserves band pairing. Slots are \emph{sampled} from stored normal context, never retrieved by matching a test query, and no write occurs during detection.

\noindent\textbf{Basis-conditioned recovery.}
At a fixed diffusion index $s$ with $\rho_s=\sqrt{1-\bar\alpha_s}$, the decoded bases enter both corruption and attention. Writing $Q,K,V\in\mathbb R^{L\times d}$ for token projections and $U^u\in\mathbb R^{L\times d}$ for basis keys,
\begin{equation}
\begin{gathered}
X_t^{(s)}=\sqrt{\bar\alpha_s}\,(X_t+\Delta_n)
+\rho_s\bigl[\eta\varepsilon+(1-\eta)(\lambda_s^hB^h+\lambda_s^lB^l)\bigr],\\
A=\operatorname{softmax}\!\Bigl(\frac{QK^\top}{\sqrt d}
+\!\!\sum_{u\in\{h,l\}}\!\!\bigl[G^u\odot S^u+\widetilde G^u\odot\widetilde S^u\bigr]\Bigr),
\\
S^u=\frac{Q(U^u)^\top}{\sqrt d},
~~
\widetilde S^u=\frac{\widetilde Q(\widetilde U^u)^\top}{\sqrt d},
\end{gathered}
\label{eq:basis-recovery}
\end{equation}
where $\varepsilon\sim\mathcal N(0,I)$, $\lambda_s^u$ weights each band, and $\Delta_n$ is a training-only channel offset repeated across time. Corruption uses the full multichannel bases; attention uses their channel-averaged projections. 
Since attention aggregates $AV$, the values still come from the observation. Two attention--feed-forward blocks and a token-wise MLP yield the recovery and its endpoint score:
\begin{equation}
\widehat X_t=g_{\mathrm{rec}}(H_t)-\Delta_n,
\quad
e_t^{\mathrm{rec}}=\frac{1}{D}\norm{x_t-\widehat x_t}_2^2 .
\label{eq:reconstruction-score}
\end{equation}
Recovery is a forward pass at one fixed index, not an iterative reverse trajectory. At detection $\Delta_n=0$ and the codec is frozen, while basis sampling and Gaussian corruption follow the evaluation seed.

\subsection{Target-Excluded Relational Closure}
\label{sec:closure-operator}

The residual of Eq.~\ref{eq:reconstruction-score} leaves undetermined whether the observed channels still explain one another, which is the assessment this path supplies. Let $Z_t\in\mathbb R^{L\times D}$ denote $S(X_t)$ standardized channel-wise across the window, and $z_j=Z_{:,j}$. A predictor able to read its target can copy that trajectory and report agreement without consulting any peer, so every message entering target $i$ comes from sources $j\ne i$ alone. Selection cannot be delegated to the coefficients either.

\begin{lemma}[\textit{Normalization erases absolute strength}]
Let $\mathcal W_\epsilon(C)_{ij}=C_{ij}/(\sum_k|C_{ik}|+\epsilon)$ denote row-wise $L_1$ normalization.\label{lem:scale-erasure}
For any $\alpha>0$ and any matrix $E$ with $s_i(E)=\sum_k|E_{ik}|>0$,
\begin{equation}
\norm{\mathcal W_\epsilon(\alpha E)_{i:}-\mathcal W_0(E)_{i:}}_1
=\frac{\epsilon}{\alpha\,s_i(E)+\epsilon}
\;\xrightarrow[\ \alpha s_i(E)/\epsilon\to\infty\ ]{}\;0 .
\label{eq:scale-erasure}
\end{equation}
\end{lemma}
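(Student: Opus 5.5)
The plan is a direct computation on row $i$, fixing $s=s_i(E)>0$.

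First, apply the definition to the scaled matrix. Absolute values are positively homogeneous, so $\sum_k|\alpha E_{ik}|=\alpha s$. Hence
\[
\mathcal W_\epsilon(\alpha E)_{ij}=\frac{\alpha E_{ij}}{\alpha s+\epsilon},
\qquad
\mathcal W_0(E)_{ij}=\frac{E_{ij}}{s}.
\]
The second expression is well defined because $s>0$. It is also unchanged if $E$ is replaced by $\alpha E$; this is the sense in which, at $\epsilon=0$, the absolute strength $\alpha$ is erased exactly.

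Second, form the entrywise difference over a common denominator:
\[
\frac{\alpha E_{ij}}{\alpha s+\epsilon}-\frac{E_{ij}}{s}
=E_{ij}\,\frac{\alpha s-(\alpha s+\epsilon)}{s(\alpha s+\epsilon)}
=-\frac{\epsilon\,E_{ij}}{s(\alpha s+\epsilon)} .
\]
The key observation is that every entry of the row is rescaled by the \emph{same} scalar factor. Taking absolute values and summing over $j$ therefore gives
\[
\frac{\epsilon}{s(\alpha s+\epsilon)}\sum_j|E_{ij}|=\frac{\epsilon}{\alpha s+\epsilon},
\]
which is the identity in Eq.~\eqref{eq:scale-erasure}.

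Third, obtain the limit by rewriting the right-hand side as $1/(1+\alpha s/\epsilon)$. This tends to $0$ as $\alpha s/\epsilon\to\infty$, with $\epsilon>0$ implicit for the ratio to make sense. Equivalently, for fixed $\epsilon$ and $E$ the normalized rows for all sufficiently large $\alpha$ are indistinguishable from the scale-free row $\mathcal W_0(E)_{i:}$. The normalized matrix thus cannot encode absolute relation strength, which motivates the separate gate parameter.

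There is no real obstacle here. The only points needing care are:
\begin{itemize}
\item using $\alpha>0$, so that $|\alpha E_{ik}|=\alpha|E_{ik}|$ without a sign flip;
\item using $s>0$, so that $\mathcal W_0$ is defined;
\item noting that the difference vector is a scalar multiple of $E_{i:}$, which turns the $L_1$ norm into a one-line computation.
\end{itemize}
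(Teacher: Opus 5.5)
Your proposal is correct and matches the paper's appendix proof essentially line for line. Like the paper, you use $\alpha>0$ to write the entrywise difference as $-\epsilon E_{ij}/\bigl(s(\alpha s+\epsilon)\bigr)$, sum absolute values to get $\epsilon/(\alpha s+\epsilon)$, read off the limit, and note that $\epsilon=0$ gives exact invariance under positive scaling.
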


\noindent
A row of uniformly weak associations therefore still receives nearly unit total weight, so we separate \emph{signed contributions}, stating how a peer explains its target, from \emph{gates}, stating whether it participates.

\noindent\textbf{Signed peer contributions.}
Let $C(\cdot)$ return a temporal Pearson correlation matrix with zero diagonal. Normal training windows $\mathcal W_{\mathrm{tr}}$ supply a signed reference over all off-diagonal candidates, which a learned correction then modifies:
\begin{equation}
\begin{aligned}
C_0&=\frac{1}{|\mathcal W_{\mathrm{tr}}|}\!\!\sum_{X\in\mathcal W_{\mathrm{tr}}}\!\!C(S(X)),\\
\omega_{ij}=C_{0,ij}+&a_\phi([v_i;v_j]), ~~
b_{ij}=\frac{\omega_{ij}}{\sum_{k\ne i}|\omega_{ik}|+\epsilon},
\end{aligned}
\label{eq:signed-coefficients}
\end{equation}
where $v_i\in\mathbb R^r$ is a channel embedding and $a_\phi$ an edge MLP. Every pair stays a candidate, including pairs with zero initial correlation, since the additive correction can reverse a sign or introduce an absent connection. Unlike $C_0$, the learned coefficients are asymmetric, as predicting $i$ from $j$ differs from the reverse.

\noindent\textbf{Per-edge gates.}
Each candidate carries a trainable logit $\alpha_{ij}$, relaxed into a hard-concrete gate~\citep{louizos2017learning} so that admission is learned by gradient descent while exact zeros stay reachable:
\begin{equation}
\begin{aligned}
g_{ij}(u)&=\operatorname{clip}_{[0,1]}\!\Bigl(\gamma+(\zeta-\gamma)\,
\sigma\bigl(T_g^{-1}[\log\tfrac{u_{ij}}{1-u_{ij}}+\alpha_{ij}]\bigr)\Bigr), \\
w_{ij}&=b_{ij}\,g_{ij}(u),
\end{aligned}
\label{eq:sparse-gates}
\end{equation}
with $u_{ij}\sim\mathcal U(0,1)$, replaced by its deterministic counterpart at validation and detection. Normalization precedes gating and none follows, since renormalizing would cancel exactly the attenuation the gates express; by Lemma~\ref{lem:scale-erasure} this ordering keeps admission decidable.

\noindent\textbf{Source-only nonlinear messages.}
Each source trajectory therefore passes through a residual temporal network conditioned on the pair identity:
\begin{equation}
\psi_\phi(z_j;v_i,v_j)
=z_j+\operatorname{Conv}_1\!\left(\operatorname{GELU}\!\left(
\operatorname{Conv}_3(z_j)+E_\phi[v_i;v_j]\right)\right).
\label{eq:neural-closure}
\end{equation}
A single aggregation step forms the prediction, $\bigl[G_{\phi,\alpha}(Z;u)\bigr]_{:,i} = \sum\nolimits_{j\ne i} w_{ij}\,\psi_\phi(z_j;v_i,v_j)$. Subscripts denote kernel widths. The offset $E_\phi[v_i;v_j]$ encodes channel identities, not the target's trajectory, and the single hop admits no all-channel mixer that could route target information back through a peer.

\noindent\textbf{Closure as the relational score.}
With $G^\star$ the frozen predictor and $\widetilde Z_t=G^\star(Z_t)$, the \emph{closure residual} measures how far the observation lies from its peer predictions, aggregated over all retained channels and compared against its own window background:
\begin{equation}
r_{t,\ell}=\frac{1}{D}\sum_{i=1}^{D}
\bigl|Z_t(\ell,i)-\widetilde Z_t(\ell,i)\bigr|,
\quad
e_t^{\mathrm{str}}=\frac{r_{t,L}}{L^{-1}\sum_{\ell=1}^{L}r_{t,\ell}+\epsilon}.
\label{eq:closure-score}
\end{equation}
The scored channel set is fixed independently of the gates, so closing a row's connections changes the predictor without removing that target from evaluation. The ratio isolates a \emph{localized change in coordination}; a departure sustained across the window raises numerator and background alike, so this score does not measure anomaly duration.

\subsection{Training and Detection}
\label{sec:training}

\textbf{Separate prediction objectives.}
Normal windows supervise both paths, but the two objectives update disjoint parameters:
\begin{equation}
\begin{aligned}
\mathcal L_{\mathrm{rec}}(\theta)
&=\mathbb E_{X,\varepsilon,\mathcal B,\Delta_n}
\left[\frac{\norm{X-f_\theta(X;\varepsilon,\mathcal B,\Delta_n)}_F^2}{LD}\right],\\
\mathcal L_{\mathrm{rel}}(\phi,\alpha)
&=\mathbb E_{Z,u}\left[\frac{1}{LD}
\sum_{\ell=1}^{L}\sum_{i=1}^{D}
\operatorname{Huber}_1\!\left(Z(\ell,i)-[G_{\phi,\alpha}(Z;u)]_{\ell,i}\right)\right] \\
&+\lambda\mathcal S(\alpha),
\end{aligned}
\label{eq:training-objective}
\end{equation}
where $X$ is drawn from $\Ntr$, $Z$ is its standardized detrended counterpart, and $\mathcal S(\alpha)$ penalizes the mean probability that a gate stays open. The codec is frozen before the reconstructor is trained, and $\mathcal L_{\mathrm{rel}}$ sends no gradient to $\theta$ or the codec. Every target contributes to the Huber loss even when all of its incoming gates close, so pruning cannot lower the objective by dropping a difficult target. Normal validation selects both checkpoints, by reconstruction error for $\theta$ and by deterministic unpenalized Huber loss for $(\phi,\alpha)$.

\noindent\textbf{Normal-calibrated decision.}
The two errors share no scale. Each is standardized by its own branch statistics on $\Ncal$ and passed through softplus to give nonnegative evidence, and the fused score is their equal-weight mean:
\begin{equation}
a_t=\tfrac12\left(u_t^{\mathrm{rec}}+u_t^{\mathrm{str}}\right),
\quad
\widehat y_t=\ind\bigl[a_t>Q_{\delta}(\{a_s\}_{s\in\mathcal I_{\mathrm{cal}}})\bigr],
\label{eq:calibration-decision}
\end{equation}
where $\delta$ is the $99$th percentile of $a_t$ over normal calibration endpoints; fusing before thresholding is not equivalent to thresholding each branch and taking a disjunction. Parameters, score transforms, and $h$ stay fixed at test time, and labels enter evaluation only, without point adjustment. This quantile is an operating point, not a guarantee of false-alarm control under distribution shift.

\subsection{Theoretical Analysis}
\label{sec:theory}

Both statements concern the frozen operator $G^\star$ in fixed standardized coordinates, before the endpoint ratio of Eq.~\ref{eq:closure-score}. The first identifies what the architectural restriction guarantees, the second which perturbations the residual can expose.

\begin{theorem}[\textit{Target-excluded closure response}]\label{thm:target-exclusion}
Let $E(Z)=Z-G^\star(Z)$ and let $e_i$ be the $i$th coordinate vector. For any target $i$ and trajectory perturbation $\delta\in\mathbb R^L$,
\begin{equation}
\bigl[G^\star(Z+\delta e_i^\top)\bigr]_{:,i}=\bigl[G^\star(Z)\bigr]_{:,i},
\quad
E(Z+\delta e_i^\top)_{:,i}-E(Z)_{:,i}=\delta .
\label{eq:target-exclusion}
\end{equation}
\end{theorem}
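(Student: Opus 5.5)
The plan is to unfold the definition of the frozen predictor column by column and read off which columns of $Z$ the $i$th output column depends on. At detection the gates are evaluated at their deterministic counterpart, and $G^\star$ uses fixed parameters $(\phi,\alpha)$. The weights $w_{ij}=b_{ij}\,g_{ij}$ are therefore constants: by Eq.~\ref{eq:signed-coefficients} and Eq.~\ref{eq:sparse-gates} they depend only on $C_0$, the embeddings $v_i,v_j$, the edge MLP $a_\phi$ and the logits $\alpha_{ij}$, and never on the input window $Z$. This holds because $C_0$ is the average over training windows, not a statistic of the current window. Establishing this input-independence of $w$ is the first step, and the one requiring the most care, since a coefficient computed from $C(S(X_t))$ on the test window itself would break the argument.

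Next I write the $i$th column explicitly as
\[
\bigl[G^\star(Z)\bigr]_{:,i}=\sum_{j\ne i} w_{ij}\,\psi_\phi(z_j;v_i,v_j),
\]
and observe from Eq.~\ref{eq:neural-closure} that each message $\psi_\phi(z_j;v_i,v_j)$ is a function of the source trajectory $z_j$ and the identity embeddings only. The offset $E_\phi[v_i;v_j]$ contains no trajectory, and there is a single hop, so there is no intermediate mixing through which $z_i$ could reach a peer's message. Hence the $i$th output column is a function of $\{z_j\}_{j\ne i}$ alone.

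Now apply the perturbation $Z'=Z+\delta e_i^\top$. Its columns satisfy $z'_j=z_j$ for $j\ne i$ and $z'_i=z_i+\delta$. Since the $i$th output depends only on the unchanged columns $z'_j=z_j$, $j\ne i$, we obtain $[G^\star(Z')]_{:,i}=[G^\star(Z)]_{:,i}$, which is the first identity. The second follows by subtraction:
\[
E(Z')_{:,i}-E(Z)_{:,i}=(z_i+\delta-z_i)-\bigl([G^\star(Z')]_{:,i}-[G^\star(Z)]_{:,i}\bigr)=\delta .
\]

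Finally, I note the scope of the statement, as the theorem's preamble indicates: it holds in fixed standardized coordinates. The channel-wise re-standardization $S(\cdot)$, which would couple $\delta$ to the scaling of column $i$, is applied before the perturbation and is not part of $G^\star$. So no further obstacle arises beyond the input-independence of the weights checked in the first step.
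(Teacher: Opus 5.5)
Your proposal is correct and follows essentially the same argument as the paper: you fix the frozen weights $w^\star_{ij}$ (built from the training-averaged $C_0$, fixed embeddings, and deterministic gates), write the $i$th column as $\sum_{j\ne i} w^\star_{ij}\,\psi_{\phi^\star}(z_j;v_i,v_j)$, note that $Z+\delta e_i^\top$ leaves every source column $j\ne i$ unchanged, and subtract to obtain $\delta$. Your remark that the statement holds only in fixed standardized coordinates, with preprocessing applied before the perturbation, also matches the paper's own caveat.
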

\begin{proof}
By Eq.~\ref{eq:neural-closure}, every message entering target $i$ is a function of $z_j$ with $j\ne i$ only, and the frozen coefficients and deterministic gates are input-independent. Perturbing $z_i$ thus leaves $[G^\star(Z)]_{:,i}$ unchanged; subtracting it gives the residual identity.
\end{proof}

\noindent
No linearity is assumed: $\psi_\phi$ is nonlinear, but no source of target $i$ is $z_i$. A target-only change therefore enters the signed residual undiminished, and cannot be absorbed by copying the target. Absolute error need not grow, since a perturbation may cancel an existing residual.

\noindent\textbf{Changes explained by peers.}
For a general perturbation $\Delta$, let $U(Z,\Delta)=\Delta-[G^\star(Z+\Delta)-G^\star(Z)]$ be the part of the change the predictor fails to reproduce, with mean absolute magnitudes $\mathcal R(Z)=\norm{E(Z)}_{1,1}/(LD)$ and $\mathcal D(Z,\Delta)=\norm{U(Z,\Delta)}_{1,1}/(LD)$. Since $E(Z+\Delta)=E(Z)+U(Z,\Delta)$ requires no linearity, the reverse triangle inequality in both orders gives:
\begin{equation}
\bigl|\mathcal R(Z+\Delta)-\mathcal D(Z,\Delta)\bigr|\le\mathcal R(Z),
\quad
\bigl|\mathcal R(Z+\Delta)-\mathcal R(Z)\bigr|\le\mathcal D(Z,\Delta).
\label{eq:closure-geometry}
\end{equation}
When normal closure is small, the first bound places $\mathcal R(Z+\Delta)$ near the unexplained part of the change, and the two together give $\mathcal D(Z,\Delta)>2\mathcal R(Z)\Rightarrow\mathcal R(Z+\Delta)>\mathcal R(Z)$. Conversely, a coordinated change the peers reproduce keeps $\mathcal D$ small and the residual near its original value, however large $\norm{\Delta}$. Neither direction requires the reconstruction error to move, which is what makes the branches non-redundant.


\section{Experiments}

\subsection{Experimental Settings}

\noindent \textbf{Benchmarks.} 
We evaluate \method{} on five real-world multivariate time series datasets.
(\textit{\romannumeral1}) \textbf{MSL} is public data released from NASA~\citep{hundman2018detecting}, which contain the telemetry anomaly data derived from the Incident Surprise Anomaly reports of spacecraft monitoring systems.
(\textit{\romannumeral2}) \textbf{SWaT}~\citep{mathur2016swat} originates from a critical infrastructure system under 11 days of continuous operations with 51 sensors.
(\textit{\romannumeral3}) \textbf{PSM}~\citep{abdulaal2021practical} is collected internally from diverse application server nodes from eBay.
(\textit{\romannumeral4}) \textbf{SMD}~\citep{su2019robust} is a five-week-long dataset released from a large Internet company.
(\textit{\romannumeral5}) \textbf{SMAP}~\cite{hundman2018detecting} contains spacecraft telemetry and anomaly annotations from NASA's Soil Moisture Active Passive mission.
We additionally consider two NeurIPS-TS scenarios used by PhysDiff~\citep{physdiff2025}:
(\textit{\romannumeral6}) \textbf{GECCO}, named Water in the local release, records nine sensor variables from a water-monitoring system, including pH, temperature, and conductivity.
(\textit{\romannumeral7}) \textbf{Creditcard} contains 29 transaction features with rare fraudulent transactions marked as anomalies, extending the benchmark scope to financial fraud detection.
For fairness, we split all datasets into training $\Ntr$, validation $\Nval$, and calibration $\Ncal$ in a $70\%/15\%/15\%$ ratio.

\noindent \textbf{Baselines.} 
We conduct a comparative analysis of \method{} against 16 baselines for comprehensive evaluations, including 
(\textit{\romannumeral1}) \textit{the classic methods}: LOF~\citep{breunig2000lof}, iForest~\citep{liu2012isolation}; 
(\textit{\romannumeral2}) \textit{prediction-based methods}: CL-MPPCA~\citep{tariq2019detecting}, TimesNet~\citep{wu2022timesnet}, MTAD-GAT~\citep{zhao2020multivariate};
(\textit{\romannumeral3}) \textit{reconstruction-based methods}: LSTM-VAE~\citep{park2018multimodal}, BeatGAN~\citep{zhou2019beatgan}, InterFusion~\citep{li2021multivariate}, OmniAnomaly~\citep{su2019robust}.
Then, from the pespective of model architecture, we compare our model with: 
(\textit{\romannumeral4}) \textit{Transformer-based methods}: Anomaly Transformer (AnoTran)~\citep{xu2021anomaly}, TranAD~\citep{tuli2022tranad}, NPSR~\citep{lai2024nominality}, DCdetector~\citep{yang2023dcdetector}; 
and (\textit{\romannumeral5}) \textit{Diffusion-based methods}: DiffAD~\citep{xiao2023imputation}, ImDiffusion~\citep{chen2023imdiffusion}, D3R~\citep{wang2023drift}.

\noindent\textbf{Evaluation Metrics.}
To assess the detection performance of our proposed \method and the baseline models, we employ three standard metrics: affiliation precision, recall, and F1~\citep{huet2022local}, denoted Aff-P, Aff-R, and Aff-F1. These metrics measure the temporal proximity between predicted anomaly intervals and ground-truth events, with Aff-F1 defined as the harmonic mean of Aff-P and Aff-R. All values are reported as percentages, and no point adjustment is applied. To ensure statistical significance, we report the average results over five independent runs.

\begin{table*}[!t]
    \centering
    \caption{Anomaly detection performance on five benchmarks. P, R, and F1 denote affiliation metrics (\%). F1 is computed as the harmonic mean of the reported P and R. Bold and underlined F1 scores indicate the best and second-best performance.}
    \vspace{-10pt}
    \label{tab:physdiff-table1}
    \begingroup
    \footnotesize
    \setlength{\tabcolsep}{3.6pt}
    \renewcommand{\arraystretch}{1.05}
    \resizebox{\textwidth}{!}{%
    \begin{tabular}{c|ccc|ccc|ccc|ccc|ccc}
        \toprule
        \textbf{Dataset} & \multicolumn{3}{c|}{\textbf{SMD}} & \multicolumn{3}{c|}{\textbf{MSL}} & \multicolumn{3}{c|}{\textbf{SMAP}} & \multicolumn{3}{c|}{\textbf{SWaT}} & \multicolumn{3}{c}{\textbf{PSM}} \\
        \midrule
        \textbf{Metric} & P & R & F1 & P & R & F1 & P & R & F1 & P & R & F1 & P & R & F1 \\
        \midrule
        LOF          & 57.69 & 99.10 & 72.93 & 49.89 & 72.18 & 59.00 & 47.92 & 82.86 & 60.72 & 53.20 & 96.73 & 68.65 & 53.90 & 99.91 & 70.02 \\
        iForest      & 100.00 & 9.37 & 17.13 & 53.87 & 94.58 & 68.64 & 41.12 & 68.91 & 51.51 & 53.03 & 62.80 & 57.50 & 100.00 & 3.35 & 6.48 \\
        CL-MPPCA     & 58.29 & 88.43 & 70.26 & 50.10 & 76.30 & 60.48 & 48.50 & 80.20 & 60.45 & 54.32 & 90.16 & 67.79 & 55.27 & 93.46 & 69.46 \\
        TimesNet     & 58.50 & 82.32 & 68.40 & 52.10 & 88.47 & 65.58 & 42.40 & 88.10 & 57.25 & 48.37 & 84.23 & 61.45 & 65.16 & 94.23 & 77.04 \\
        MTAD-GAT     & 85.90 & 67.69 & 75.72 & 54.96 & 94.93 & 69.62 & 39.05 & 93.99 & 55.18 & 65.90 & 77.51 & 71.24 & 79.90 & 60.14 & 68.63 \\
        LSTM-VAE     & 70.26 & 78.53 & 74.17 & 60.30 & 91.26 & \underline{72.62} & 55.10 & 88.60 & 67.95 & 62.44 & 80.18 & 70.21 & 57.23 & 92.40 & 70.68 \\
        BeatGAN      & 74.11 & 81.64 & 77.69 & 55.74 & 98.94 & 71.31 & 54.04 & 98.30 & 69.74 & 61.89 & 83.46 & 71.07 & 58.81 & 99.08 & 73.81 \\
        InterFusion  & 56.20 & 90.10 & 69.22 & 48.40 & 92.30 & 63.50 & 42.10 & 93.21 & 58.00 & 50.20 & 88.50 & 64.06 & 60.30 & 95.60 & 73.95 \\
        DeepSVDD     & 64.98 & 64.77 & 64.87 & 10.53 & 100.00 & 19.05 & 29.73 & 7.09 & 11.45 & 59.11 & 93.53 & \underline{72.44} & 74.05 & 50.64 & 60.15 \\
        AnoTran      & 100.00 & 3.19 & 6.18 & 51.04 & 95.36 & 66.49 & 56.91 & 96.69 & \underline{71.65} & 53.63 & 59.94 & 56.61 & 52.01 & 82.18 & 63.70 \\
        TranAD       & 59.18 & 91.26 & 71.80 & 53.20 & 94.10 & 67.97 & 45.65 & 96.20 & 61.92 & 52.12 & 90.30 & 66.09 & 63.23 & 97.10 & 76.59 \\
        NPSR         & 57.45 & 89.82 & 70.08 & 50.50 & 92.61 & 65.36 & 44.20 & 95.10 & 60.35 & 49.67 & 88.20 & 63.55 & 61.18 & 96.30 & 74.82 \\
        DCdetector   & 50.93 & 95.57 & 66.45 & 55.94 & 95.53 & 70.56 & 53.12 & 98.37 & 68.99 & 53.25 & 98.12 & 69.03 & 54.72 & 86.36 & 66.99 \\
        DiffAD       & 60.25 & 92.50 & 72.97 & 55.10 & 95.30 & 69.83 & 48.30 & 97.10 & 64.51 & 54.20 & 91.56 & 68.09 & 62.85 & 98.20 & 76.65 \\
        ImDiffusion  & 61.36 & 93.25 & 74.02 & 56.22 & 96.11 & 70.94 & 49.17 & 97.80 & 65.44 & 55.45 & 92.31 & 69.28 & 63.27 & 98.60 & \underline{77.08} \\
        D3R          & 64.87 & 97.93 & \underline{78.04} & 56.45 & 95.55 & 70.97 & 51.08 & 94.46 & 66.31 & 64.25 & 77.50 & 70.26 & 53.17 & 100.00 & 69.43 \\
        \midrule
        \rowcolor{orbitblue} \method{}    & 81.65 & 95.73 & \textbf{88.13} & 61.53 & 96.85 & \textbf{75.25} & 58.90 & 98.41 & \textbf{73.69} & 63.14 & 86.71 & \textbf{73.07} & 65.28 & 98.76 & \textbf{78.60} \\
        \bottomrule
    \end{tabular}
    }
    \endgroup
\end{table*}

\begin{figure*}[!t]
    \centering
    \includegraphics[width=\textwidth]{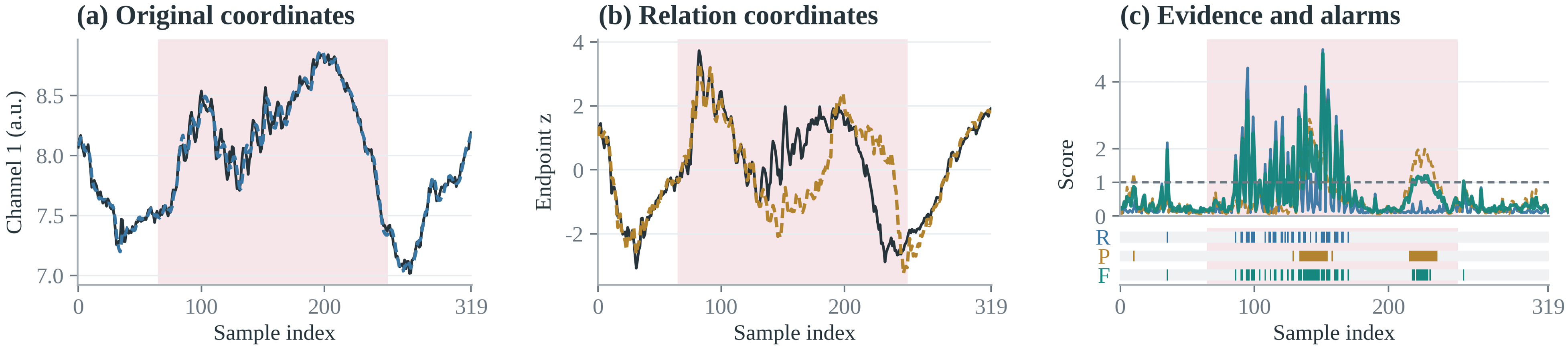}
    \vspace{-15pt}
    \caption{\textbf{Complementary scoring responses.} (a) Observation and reconstruction; (b) window-standardized observation and peer prediction; (c) scores and alarm strips. Black denotes observations; blue (R), reconstruction; gold (P), relation; and teal (F), equal-weight fusion. Pink shading marks the annotated anomalous interval. Scores are divided by their respective normal-calibrated thresholds, with the gray dashed line at one indicating the alarm boundary.}
    \label{fig:s}
\end{figure*}

\subsection{Main Results}
\label{sec:main-results}

\subsubsection{Quantitative Results}

\begin{table}[t]
\centering
\caption{Performance on the Creditcard and GECCO scenarios of NeurIPS-TS. P, R, and F1 denote affiliation metrics (\%). Bold and underlined F1 scores indicate the best and second-best.}
\label{tab:physdiff-table2}
\renewcommand{\arraystretch}{1.05}
\resizebox{\linewidth}{!}{
\begin{tabular}{ccccccc}
        \toprule
         & \multicolumn{3}{c|}{\textbf{Creditcard}} & \multicolumn{3}{c}{\textbf{GECCO}} \\
        \midrule
         & P & R & F1 & P & R & F1 \\
        \midrule
        iForest    & 1.57 & 88.16 & 3.09 & 2.55 & 97.35 & 4.97 \\
        BeatGAN    & 51.76 & 60.48 & 55.78 & 62.71 & 58.94 & \underline{60.77} \\
        AnoTran    & 55.86 & 57.55 & 56.69 & 54.77 & 57.21 & 55.96 \\
        DCdetector & 49.86 & 67.67 & 57.42 & 38.30 & 59.70 & 46.66 \\
        D3R        & 56.90 & 69.61 & \underline{62.62} & 59.00 & 36.61 & 45.18 \\
        \midrule
        \rowcolor{orbitblue} \method{} & 64.27 & 75.61 & \textbf{69.48} & 69.17 & 71.32 & \textbf{70.23} \\
        \bottomrule
    \end{tabular}
}
\vspace{-5pt}
\end{table}

\noindent\textbf{Performance across benchmarks.}
Table~\ref{tab:physdiff-table1} reports the highest Aff-F1 for \method{} among the listed methods on all five datasets. The largest margin is on SMD, where its $88.13$ exceeds D3R's $78.04$ by $10.09$ percentage points. On the spacecraft datasets, \method{} reaches $75.25$ on MSL and $73.69$ on SMAP, exceeding the strongest respective baselines, LSTM-VAE and AnoTran, by $2.63$ and $2.04$ points. The gain on PSM is $1.52$ points over ImDiffusion. SWaT presents a smaller gap: $73.07$ vs $72.44$ for DeepSVDD. The advantage in the reported F1 scores therefore spans the listed application domains, but its magnitude is dataset-dependent.

\noindent\textbf{Precision--recall balance.}
The individual metrics reveal different operating characteristics among the methods. On SMD, iForest and AnoTran both attain $100.00$ precision, but their recall falls below $10.00$. \method{} combines $81.65$ precision with $95.73$ recall, avoiding this pronounced imbalance. Relative to D3R, it increases precision by $16.78$ percentage points while reducing recall by $2.20$ points, resulting in a higher reported F1.

The improvements on MSL and SMAP involve increases in both metrics relative to the respective strongest F1 baselines. On MSL, \method{} improves precision and recall over LSTM-VAE by $1.23$ and $5.59$ percentage points. On SMAP, the corresponding gains over AnoTran are $1.99$ and $1.72$ points. On PSM, \method{} also improves both metrics over ImDiffusion, raising precision from $63.27$ to $65.28$ and recall from $98.60$ to $98.76$. By contrast, SWaT exhibits a precision--recall tradeoff: \method{} has higher precision than DeepSVDD ($63.14$ vs $59.11$), alongside lower recall ($86.71$ vs $93.53$).

\noindent\textbf{NeurIPS-TS results.}
Table~\ref{tab:physdiff-table2} extends the evaluation to the Creditcard and GECCO scenarios. \method{} achieves Aff-F1 scores of $69.48$ and $70.23$, exceeding the strongest listed baselines by $6.86$ and $9.46$ percentage points, respectively.

On Creditcard, \method{} improves over D3R in both precision ($64.27$ vs $56.90$) and recall ($75.61$ vs $69.61$). On GECCO, it similarly exceeds BeatGAN in precision ($69.17$ vs $62.71$) and recall ($71.32$ vs $58.94$), with a larger improvement in recall. In both scenarios, \method{} achieves the highest precision among the compared methods, while iForest attains higher recall at the cost of precision below $3.00$. The resulting F1 comparisons reinforce the importance of jointly considering precision and recall. They also show that the advantage of \method{} extends to both additional scenarios without requiring it to maximize recall alone.

\subsubsection{Qualitative Analysis}
\label{sec:qualitative-analysis}

Figure~\ref{fig:s} illustrates the scoring principle on a four-channel sequence. Panels (a) and (b) display the first channel in original and window-standardized coordinates, respectively, while all scores aggregate four channels. The reconstructor closely follows the observed trajectory, including the smooth deviation in the later part of the anomalous interval. Consequently, a visible departure in the trajectory does not necessarily produce a large reconstruction residual. Peer prediction departs from the observation over part of this interval, providing an additional criterion for identifying behavior that remains locally reconstructable.

The resulting responses are complementary in their temporal locations. Reconstruction scoring is most responsive to the earlier fluctuations, whereas relational scoring contributes additional alarms during the later departure. This distinction illustrates how agreement with a reconstructed trajectory and agreement with a relational reference can expose different aspects of the same anomalous interval.

Panel (c) shows how these responses combine after normal-data calibration. Reconstruction and relation scoring detect $39$ and $44$ anomalous points, respectively, with $10$ shared detections. Equal-weight fusion detects $58$ anomalous points, exceeding the individual branches by $19$ and $14$ points. The limited overlap between branch detections is consistent with the two criteria providing complementary information.

Fusion nevertheless operates on scores rather than on the union of binary alarms. The branch detections have a union of $73$ anomalous points, exceeding the $58$ detected by fusion, so some points detected by individual branches are not retained. Moreover, fusion is applied before normalization by its own calibrated threshold, and its displayed curve is therefore not the mean of the two normalized branch curves. Normal-point alarms also increase from $1$ for either branch to $2$ for fusion. The example thus illustrates how complementary responses can improve anomalous-point detection while retaining both missed detections and false alarms.

\begin{figure}[t]
    \centering
    \includegraphics[width=\linewidth]{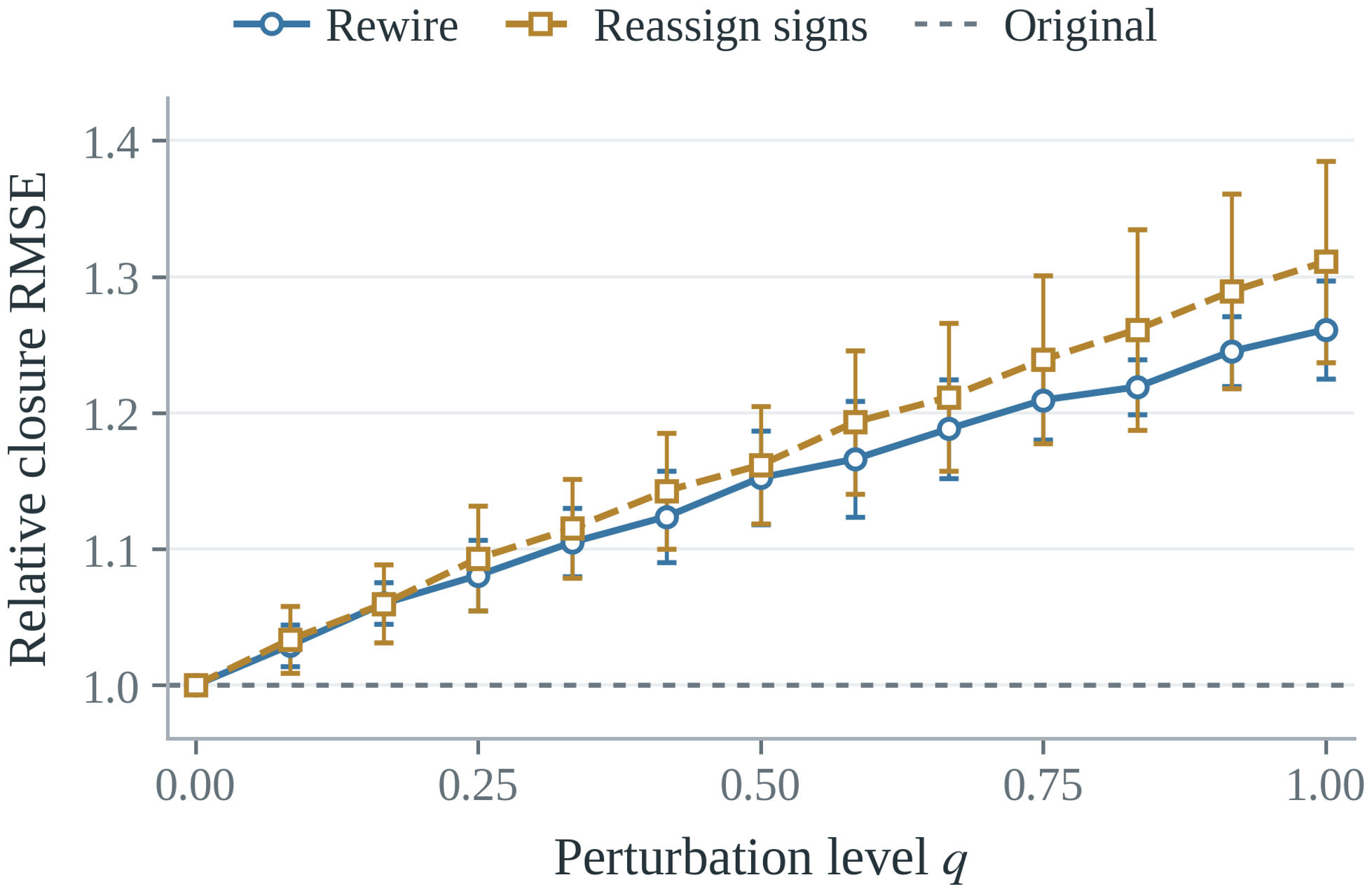}
    \vspace{-5pt}
    \caption{\textbf{Sensitivity to the relation reference.} Normalized closure RMSE (mean $\pm$ SD, $12$ paired trials). Blue: rewiring $q$ of the support; gold: flipping $q/2$ of edge signs; gray dashed: unperturbed baseline.}
    \vspace{-5pt}
    \label{fig:mechanism-reference}
\end{figure}

\subsection{Mechanism Analysis}
\label{sec:mechanism-analysis}

We hold observations fixed and perturb only their relational reference. Rewiring changes peer identities while preserving the incoming and outgoing degrees and each row's signed weight multiset. Sign reassignment preserves the connections and weight magnitudes while changing relation signs. Within each trial, both interventions use the same observations at every perturbation level, allowing us to examine the effect of reference changes on the residual.

Figure~\ref{fig:mechanism-reference} reports closure RMSE normalized by its unperturbed value within each trial on SMAP dataset. Across $12$ paired trials, the mean ratio reaches $1.261$ under rewiring and $1.311$ under sign reassignment at the largest perturbation level. These increases show that both peer identities and relation signs affect the residual, even when degrees or weight magnitudes are preserved. The two interventions are not budget-matched, so their ordering does not establish which aspect is more important. The results support the sensitivity of closure residuals to the relational reference, but do not by themselves establish relation-recovery accuracy or anomaly-detection performance.

\subsection{Fusion-Weight Sensitivity}
\label{sec:fusion-weight-sensitivity}

\begin{figure}[t]
    \centering
    \includegraphics[width=\linewidth]{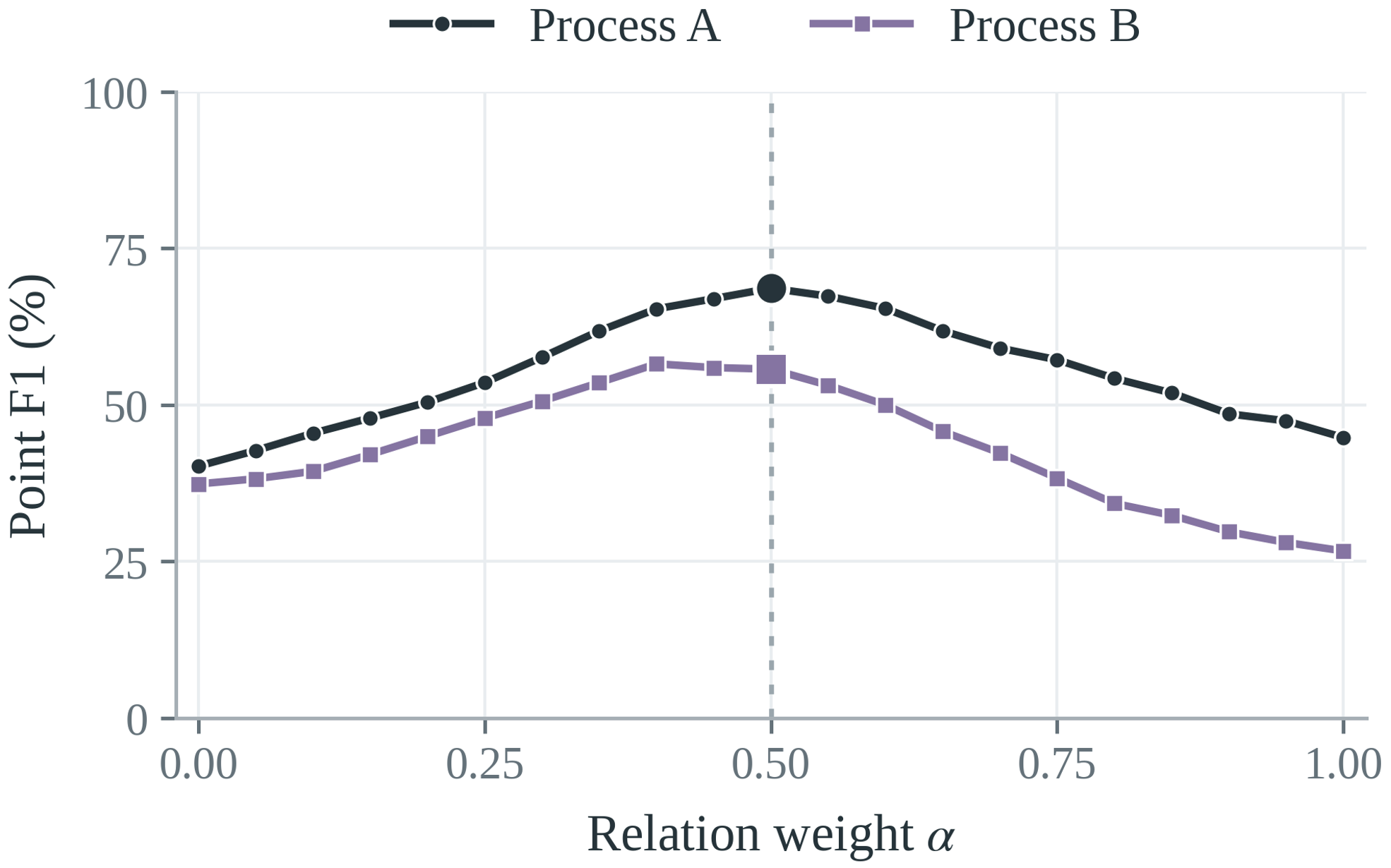}
    \vspace{-5pt}
    \caption{\textbf{Sensitivity to the fusion weight.} Unadjusted point F1 for Processes A and B, with a separately normal-calibrated $99$th-percentile threshold per weight. Endpoints represent individual branches; the dashed line and enlarged markers indicate equal-weight fusion.}
    \label{fig:fusion-weight-sensitivity}
    \vspace{-5pt}
\end{figure}

We examine how the balance between reconstruction and relation scoring affects detection performance. After branch standardization and softplus transformation calibrated on an independent normal sequence, we evaluate the fused score
$(1-\alpha)u^{\mathrm{rec}}+\alpha u^{\mathrm{str}}$
at $21$ weights. The detection threshold is recalibrated for each weight using the $99$th percentile of its normal scores.
Figure~\ref{fig:fusion-weight-sensitivity} shows that intermediate fusion weights outperform either standalone branch for both processes on SMD dataset. At $\alpha=0.5$, point F1 reaches $68.63\%$ and $55.71\%$ for Processes A and B, respectively, compared with $44.77\%$ and $37.37\%$ for their stronger standalone branches. This pattern is consistent with the two paths providing complementary evidence: intermediate weights retain contributions from both paths, whereas moving toward either endpoint reduces the contribution of one branch. The optimum is not necessarily symmetric. Process B peaks at $\alpha=0.4$, with $56.55\%$ F1, exceeding equal-weight fusion by $0.84$ percentage points. These results support equal weighting as an effective choice in the evaluated cases, while showing that the best balance can vary across processes.

\subsection{Event Coverage Analysis}
\label{sec:event-coverage-analysis}

\begin{figure}[t]
    \centering
    \includegraphics[width=\linewidth]{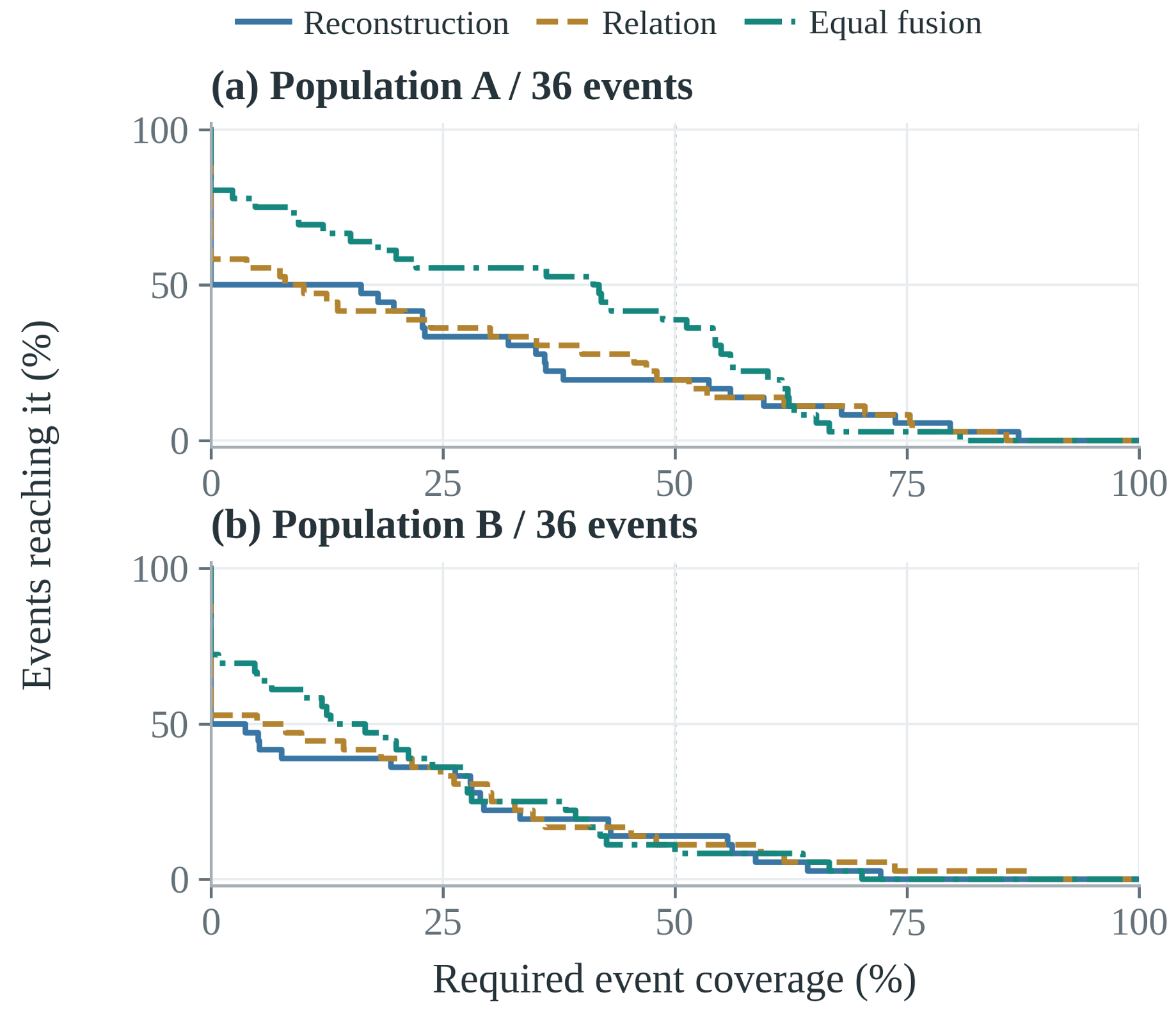}
    \vspace{-5pt}
    \caption{\textbf{Event retrieval and within-event coverage.} Percentage of $36$ events meeting each coverage requirement. Blue/gold/teal: reconstruction/relation/equal fusion.}
    \label{fig:event-coverage}
\end{figure}

We distinguish detecting an event from covering its anomalous points. Figure~\ref{fig:event-coverage} summarizes two event sets, labeled Populations A and B, each containing $36$ events on SMD dataset. Branch transformations and each rule's $99$th-percentile threshold are calibrated on an independent normal sequence. Event coverage is the fraction of anomalous points within an event receiving an alarm, without point adjustment or event completion. Each curve reports the percentage of events whose coverage is at least the required level, weighting every event equally. Event retrieval requires strictly positive coverage, whereas all events qualify at a required coverage of zero. This metric measures the extent of coverage rather than the temporal continuity of alarms.

In Population A, equal-weight fusion detects at least one point in $29$ events, compared with $18$ for reconstruction and $21$ for relation scoring. It covers at least half the anomalous points in $14$ events, versus $7$ for either branch. Population B shows a different tradeoff: fusion retrieves $26$ events, compared with $18$ for reconstruction and $19$ for relation scoring, yet reaches at least half coverage in only $4$ events, compared with reconstruction's $5$. Detecting more events therefore need not imply higher within-event coverage. Separately, the observed normal-point alarm rate in Population A is $1.55\%$ under fusion, compared with $0.74\%$ and $1.29\%$ for reconstruction and relation scoring, respectively. These results show that event retrieval, within-event coverage, and false alarms capture distinct aspects of detection performance; improvements in one do not necessarily imply improvements in the others.

\subsection{Ablation Study}

\begin{table}[t]
\centering
\caption{\textbf{Component ablations.} P, R, and F1 are affiliation metrics. The bold denotes the highest.}
\vspace{-5pt}
\label{tab:current-ablation}
\resizebox{\linewidth}{!}{
\begin{tabular}{ccccccc}
        \toprule
        & \multicolumn{3}{c}{\textbf{SWaT}} & \multicolumn{3}{c}{\textbf{SMD}} \\
        \cmidrule(lr){2-4}\cmidrule(l){5-7}
        \textbf{Variant} & P & R & F1 & P & R & F1 \\
        \midrule
        Rec.\ only & 60.42 & 83.15 & 69.99 & 72.60 & 84.50 & 78.10 \\
        Rel.\ only & 58.76 & 80.93 & 68.08 & 70.60 & 81.80 & 75.79 \\
        Raw bases & 61.85 & 84.92 & \underline{71.57} & 74.80 & 87.20 & \underline{80.53} \\
        \rowcolor{orbitblue} Full model & 63.14 & 86.71 & \textbf{73.07} & 81.65 & 95.73 & \textbf{88.13} \\
        \bottomrule
    \end{tabular}
}
\vspace{-5pt}
\end{table}


Table~\ref{tab:current-ablation} examines the two scoring paths and the learned basis memory. The single-path variants keep one path and omit the other; the raw-basis variant replaces the VQ codec with the original basis memory while retaining both scoring paths.

Adding relational scoring to reconstruction raises Aff-F1 from $69.99$ to $73.07$ on SWaT and from $78.10$ to $88.13$ on SMD. The full model also exceeds relation-only scoring by $4.99$ and $12.34$ percentage points, respectively. The relational path is the weaker standalone detector here, yet removing it lowers both precision and recall, consistent with peer consistency acting as a complement rather than a replacement.
Replacing VQ memory with raw bases lowers Aff-F1 by $1.50$ and $7.60$ percentage points on SWaT and SMD, respectively. Since this variant still supplies normal basis templates and retains both scoring paths, the gap suggests a benefit from learning the basis representation beyond simply providing normal temporal context. This comparison evaluates the encoder, codebook, and decoder jointly and does not isolate the contribution of quantization alone.
\section{Conclusion}



We revisited a premise of reconstruction-based anomaly detection, that difficulty of recovery tracks abnormality. A residual certifies that a window can be reproduced, and leaves undetermined whether its channels still explain one another.\method{} evaluates recoverability and cross-channel consistency through two separately learned prediction tasks on normal data. A diffusion reconstructor conditioned on a discrete memory of normal temporal bases assesses recoverability. A relation-closure operator assesses cross-channel consistency by predicting each channel from its peers. The target's trajectory is withheld, and per-edge gates select the participating peers. Comprehensive experiments demonstrate the effectiveness of our methods across multiple benchmarks.

\bibliographystyle{ACM-Reference-Format}
\bibliography{sample-base}





\newpage
\section*{Appendices}
\appendix


\section{Baseline Descriptions}
\label{app:baselines}

We summarize the methods appearing in the main comparison tables and experimental setup; the descriptions concern their modeling principles rather than a claim that their original evaluation protocols are identical.

\begin{itemize}[leftmargin=1.2em,itemsep=4pt,topsep=4pt]
\item \textbf{LOF}~\citep{breunig2000lof} compares an observation's local density with those of its nearest neighbors and assigns a larger outlier score to observations in relatively sparse neighborhoods.
\item \textbf{iForest}~\citep{liu2012isolation} isolates observations through random recursive partitions and scores points that require shorter isolation paths as more anomalous.
\item \textbf{CL-MPPCA}~\citep{tariq2019detecting} combines convolutional LSTM modeling with mixtures of probabilistic principal component analyzers to capture temporal behavior and multiple normal operating patterns.
\item \textbf{TimesNet}~\citep{wu2022timesnet} reshapes a sequence into period-dependent two-dimensional tensors and uses convolutional blocks to model within-period and across-period variation.
\item \textbf{MTAD-GAT}~\citep{zhao2020multivariate} combines feature-wise and temporal graph attention with forecasting and reconstruction objectives to model dependencies across variables and time.
\item \textbf{LSTM-VAE}~\citep{park2018multimodal} models normal sequences with recurrent latent-variable encoding and decoding, using the resulting probabilistic reconstruction to assess anomalous observations.
\item \textbf{BeatGAN}~\citep{zhou2019beatgan} regularizes time-series reconstruction through adversarial training and identifies anomalous regions from discrepancies between observations and generated reconstructions.
\item \textbf{InterFusion}~\citep{li2021multivariate} uses a hierarchical variational model to capture inter-metric and temporal dependencies and evaluates observations through their reconstruction probabilities.
\item \textbf{DeepSVDD}~\citep{zhou2021vae} learns a representation concentrated around a normal-data center and detects anomalies from their distance to that center rather than reconstruction error.
\item \textbf{Anomaly Transformer (AnoTran)}~\citep{xu2021anomaly} contrasts a temporal prior association with learned series associations and combines their discrepancy with reconstruction evidence for detection.
\item \textbf{TranAD}~\citep{tuli2022tranad} uses attention-based reconstruction with self-conditioning and adversarial training to model multivariate windows and score their reconstruction deviations.
\item \textbf{NPSR}~\citep{lai2024nominality} combines point-wise and sequence-wise reconstruction to estimate nominality and condition anomaly scores on the distinction between point and contextual deviations.
\item \textbf{DCdetector}~\citep{yang2023dcdetector} learns multiscale representations with an asymmetric dual-attention contrastive objective and derives detection evidence without a reconstruction objective.
\item \textbf{DiffAD}~\citep{xiao2023imputation} detects anomalies through conditional diffusion-based imputation, progressively adjusting conditioning weights to preserve information from observed values.
\item \textbf{ImDiffusion}~\citep{chen2023imdiffusion} learns diffusion-based time-series imputers and uses their denoised estimates to measure departures from temporal and cross-variable context.
\item \textbf{D3R}~\citep{wang2023drift} combines dynamic decomposition with diffusion reconstruction to separate nonstationary drift from the deviations used for anomaly detection.
\item \textbf{OmniAnomaly}~\citep{su2019robust}, also mentioned in the experimental setup, uses a stochastic recurrent model with a flexible variational posterior and reconstruction probabilities to detect multivariate anomalies.
\end{itemize}

\section{Datasets and Data Handling}
\label{app:datasets}

\subsection{Benchmark Descriptions}

\paragraph{Server Machine Dataset (SMD).}
SMD contains five weeks of monitoring data from 28 machines, with 38 recorded metrics per machine~\citep{su2019robust}. Its source release supplies separate training and test sequences and test anomaly annotations for each machine. Machine identities define separate sequences, not adjacent portions of one continuous trajectory.

\paragraph{Mars Science Laboratory (MSL).}
MSL contains spacecraft telemetry from the Curiosity rover and expert-annotated anomalous intervals~\citep{hundman2018detecting}. The source collection contains 27 telemetry streams. A stream's model inputs include a monitored telemetry variable and encoded command context; these input columns should not all be interpreted as distinct physical sensors.

\paragraph{Soil Moisture Active Passive (SMAP).}
SMAP is the companion NASA spacecraft collection with 55 telemetry streams and annotated anomalous intervals~\citep{hundman2018detecting}. As in MSL, the monitored signal is accompanied by command-related inputs. The number of streams in the collection is distinct from the feature dimension of an individual input window.

\paragraph{Secure Water Treatment (SWaT).}
SWaT records the operation of a six-stage water-treatment testbed through 51 sensor and actuator variables~\citep{mathur2016swat}. The commonly used release contains normal-operation data and a separate attack period. Measurements describe interacting processes such as flow, tank level, and valve operation, making cross-variable consistency relevant to monitoring.

\paragraph{Pooled Server Metrics (PSM).}
PSM contains multivariate monitoring signals collected from eBay application servers~\citep{abdulaal2021practical}. The released training and test files describe server behavior through anonymized metrics, with anomaly labels supplied separately for testing. It complements SMD with a different operational environment and dependency structure.

\paragraph{GECCO.}
The GECCO scenario in the NeurIPS time-series benchmark originates from drinking-water quality monitoring~\citep{lai2021revisiting}. It is named \texttt{Water} in the local release and contains nine measurement variables, including pH, temperature, and conductivity. Its annotations concern departures in a continuously monitored water-quality process.

\paragraph{Creditcard.}
The Creditcard scenario contains transaction records with fraud annotations and is included in the same benchmark~\citep{lai2021revisiting}. The local release uses 29 transaction features. Unlike regularly sampled sensor measurements, consecutive observations represent transaction events; window position therefore describes event order rather than a fixed physical time interval.

\section{Algorithmic Details and Pseudocode}
\label{app:algorithms}

We detail the independent fitting of the basis codec, reconstructor, and sparse relation operator in Section \ref{sec:method}, followed by calibration of their equal-weight score. The relational formulation assumes $D\ge2$ retained channels. We write $\mathcal W_q$ for the windows from segment $q\in\{\mathrm{tr},\mathrm{val},\mathrm{cal},\mathrm{test}\}$ and store each input window together with its aligned detrended copy.

\subsection{Fitting the Three Components}
\label{app:codec-objective}

\paragraph{Basis codec.}
For a normalized basis patch $b$, let $z=E_u(b)$, $c=c^u_{q_u(b)}$, and $\widehat b=D_u(z+\sg[c-z])$, as in Eq. \ref{eq:memory-quantization}. Each band minimizes the usual straight-through quantization objective~\citep{van2017neural}:
\begin{equation}
\mathcal L_{\mathrm{VQ}}^u
=\mathbb E_b\!\left[
\operatorname{MSE}(b,\widehat b)
+\operatorname{MSE}(\sg[z],c)
+\beta_{\mathrm{vq}}\operatorname{MSE}(z,\sg[c])
\right].
\label{eq:app-vq-loss}
\end{equation}
Here $\operatorname{MSE}$ averages squared coordinate differences, and $\beta_{\mathrm{vq}}>0$ weights commitment to the selected code. The codebook loss updates the selected code vector, while the straight-through reconstruction gradient and commitment loss update the encoder. The fitted codecs and template indices are subsequently frozen. Each stored template retains paired high- and low-frequency code sequences; the two bands are read with the same template index.

\paragraph{Reconstruction and relation fitting.}
The reconstructor minimizes $\mathcal L_{\mathrm{rec}}$ with the frozen memory. Independently, the relation predictor minimizes $\mathcal L_{\mathrm{rel}}$ over all retained targets. Its candidates comprise all $D(D-1)$ directed off-diagonal pairs, not just the nonzero entries of $C_0$. One hard-concrete gate vector is drawn per training minibatch; validation uses deterministic gates. The scored target set is not changed by pruning.

\section{Detailed Theoretical Analysis}
\label{app:proofs}

We prove Lemma~\ref{lem:scale-erasure}, Theorem~\ref{thm:target-exclusion}, and Eq. \ref{eq:closure-geometry}, and derive the gate penalty. The results concern a frozen operator in fixed standardized coordinates. They do not imply that every anomaly violates its relations, that the branches are statistically independent, or that fusion improves detection. Nor does an empirical normal-data quantile guarantee a distribution-free false-alarm rate under dependence or distribution shift. Detection accuracy and event coverage require empirical evaluation.

\subsection{Proof of the Normalization Lemma}
\label{app:normalization-proof}

Fix a row $i$, let $s=\sum_j|E_{ij}|>0$, and take $\alpha>0$ and $\epsilon>0$. Positive scaling preserves signs, so
\begin{equation}
\begin{aligned}
\mathcal W_\epsilon(\alpha E)_{ij}-\mathcal W_0(E)_{ij}
&=\frac{\alpha E_{ij}}{\alpha s+\epsilon}-\frac{E_{ij}}{s}\\
&=-\frac{\epsilon E_{ij}}{s(\alpha s+\epsilon)}.
\end{aligned}
\label{eq:app-normalization-difference}
\end{equation}
Taking absolute values and summing over $j$ proves the exact identity
\begin{equation}
\sum_j\left|\mathcal W_\epsilon(\alpha E)_{ij}-\mathcal W_0(E)_{ij}\right|
=\frac{\epsilon\sum_j|E_{ij}|}{s(\alpha s+\epsilon)}
=\frac{\epsilon}{\alpha s+\epsilon}.
\label{eq:app-normalization-proof}
\end{equation}
The right-hand side tends to zero when $\alpha s/\epsilon\to\infty$, establishing Lemma~\ref{lem:scale-erasure}. If $\epsilon=0$, every positive scaling gives exactly the same normalized row. The case $s=0$ is excluded from the lemma because $\mathcal W_0(E)_{i:}$ is undefined; the implemented $\epsilon$-regularized map instead returns a zero row.

\paragraph{Why the order of gating matters.}
Let $s_i=\sum_{j\ne i}|\omega_{ij}|$ and let $m_i$ be the absolute row mass after deterministic gating. Since $0\le\bar g_{ij}\le1$,
\begin{equation}
m_i=\sum_{j\ne i}|b_{ij}|\bar g_{ij}
=\frac{\sum_{j\ne i}|\omega_{ij}|\bar g_{ij}}{s_i+\epsilon}
\le\frac{s_i}{s_i+\epsilon}\le1.
\label{eq:app-gate-mass}
\end{equation}
If all gates in a row equal $c\in[0,1]$, its mass is $cs_i/(s_i+\epsilon)$: the gates retain control over attenuation after normalization. A second exact normalization would remove this control because, for $c>0$ and a nonzero row $b_{i:}$,
\begin{equation}
\frac{c b_{ij}}{\sum_k|c b_{ik}|}=\frac{b_{ij}}{\sum_k|b_{ik}|}.
\label{eq:app-renormalization}
\end{equation}
With a positive numerical stabilizer, cancellation is approximate when the row mass dominates that stabilizer, as quantified by the lemma. Normalization thus preserves relative contributions but not their absolute scale. This motivates separate attenuation parameters; it does not prove that learned sparsity necessarily improves detection.

\subsection{Expected Gate Count and Deterministic Evaluation}
\label{app:gate-derivation}

For the hard-concrete gate in Eq. \ref{eq:sparse-gates}, assume $T_g>0$ and $\gamma<0<1<\zeta$. The event $g_{ij}>0$ is unaffected by clipping at the upper endpoint. Writing $u=u_{ij}$ and $\alpha=\alpha_{ij}$ gives
\begin{equation}
\begin{aligned}
g_{ij}>0
&\iff \sigma\!\left(\frac{\log\frac{u}{1-u}+\alpha}{T_g}\right)
>\frac{-\gamma}{\zeta-\gamma}\\
&\iff \log\frac{u}{1-u}>T_g\log\frac{-\gamma}{\zeta}-\alpha.
\end{aligned}
\label{eq:app-gate-event}
\end{equation}
Because $u\sim\mathcal U(0,1)$, $\Pr(\log\frac{u}{1-u}\le x)=\sigma(x)$, and therefore
\begin{equation}
\pi_{ij}:=\Pr(g_{ij}>0)
=\sigma\!\left(\alpha_{ij}-T_g\log\frac{-\gamma}{\zeta}\right),
\mathcal S(\alpha)=\frac{1}{D(D-1)}\sum_{i\ne j}\pi_{ij}.
\label{eq:app-expected-gates}
\end{equation}
This is the expected fraction of open candidate gates, following the hard-concrete relaxation of \citet{louizos2017learning}; it is not a confidence probability that an edge is physically correct. An open gate can still multiply a zero coefficient, so gate counts and nonzero effective-coefficient counts need not agree.

Validation and detection replace sampling by the deterministic surrogate
\begin{equation}
\bar g_{ij}=\operatorname{clip}_{[0,1]}
\left(\gamma+(\zeta-\gamma)\sigma(\alpha_{ij})\right).
\label{eq:app-deterministic-gates}
\end{equation}
The reference implementation uses $T_g=2/3$, $\gamma=-0.1$, and $\zeta=1.1$. In general, $\bar g_{ij}\ne\mathbb E[g_{ij}]$. Closing every incoming gate of target $i$ sets its prediction to zero, but its Huber loss remains in the objective; pruning cannot remove the target from supervision or scoring.

\subsection{Proof of Target-Excluded Closure Response}
\label{app:target-proof}

Fix the fitted parameters, channel embeddings, and deterministic gates. Then target $i$ has the functional form
\begin{equation}
G_i^\star(Z)=\sum_{j\ne i}w_{ij}^\star\,
\psi_{\phi^\star}(z_j;v_i^\star,v_j^\star),
\qquad z_j=Z_{:,j}.
\label{eq:app-target-map}
\end{equation}
The weights and identity embeddings do not depend on the current observation. Let $Z'=Z+\delta e_i^\top$. For every $j\ne i$, $Z'_{:,j}=z_j$, hence every summand in Eq. \ref{eq:app-target-map} is unchanged and $G_i^\star(Z')=G_i^\star(Z)$. Subtracting predictions from observations yields
\begin{equation}
\begin{aligned}
E_i(Z')-E_i(Z)
&=(z_i+\delta-G_i^\star(Z'))-(z_i-G_i^\star(Z))\\
&=\delta.
\end{aligned}
\label{eq:app-target-proof}
\end{equation}
This proves both identities in Theorem~\ref{thm:target-exclusion}. The argument is exact for nonlinear source messages and arbitrary perturbation magnitude. At differentiable points, the equivalent Jacobian statement is
\begin{equation}
\frac{\partial G_i^\star}{\partial z_i}=0_{L\times L},
\qquad
\frac{\partial E_i}{\partial z_i}=I_L.
\label{eq:app-target-jacobian}
\end{equation}
The result requires exclusion from the entire forward path, not just a zero diagonal in one matrix. A target-dependent gate, a shared all-channel feature mixer, or multihop feedback through another channel could invalidate the argument. None is present in the specified one-hop operator. A change to channel $i$ can still alter predictions for other targets, since it is allowed to serve as their source.

\paragraph{Signed response versus error magnitude.}
The identity does not imply $\norm{E_i(Z')}_1>\norm{E_i(Z)}_1$. For example, choosing $\delta=-E_i(Z)$ makes the new target residual zero. What always holds is
\begin{equation}
\norm{\delta}_1-\norm{E_i(Z)}_1
\le\norm{E_i(Z')}_1
\le\norm{\delta}_1+\norm{E_i(Z)}_1.
\label{eq:app-target-norm}
\end{equation}
Thus $\norm{\delta}_1>2\norm{E_i(Z)}_1$ suffices to increase that target's absolute residual, but a conclusion about all-channel aggregation additionally requires accounting for the other targets.

\subsection{Proof of the General Perturbation Bounds}
\label{app:perturbation-proof}

For an arbitrary $\Delta\in\mathbb R^{L\times D}$, define
\begin{equation}
U=\Delta-\bigl(G^\star(Z+\Delta)-G^\star(Z)\bigr),
\qquad E'=E(Z+\Delta),\quad E=E(Z).
\label{eq:app-unexplained}
\end{equation}
Direct substitution gives the exact identity $E'=E+U$. Let $\norm{A}_{1,1}=\sum_{\ell,i}|A_{\ell i}|$. Applying the reverse triangle inequality first to $E'$ and $U$, then to $E'$ and $E$, gives
\begin{equation}
\begin{aligned}
\left|\norm{E'}_{1,1}-\norm{U}_{1,1}\right|
&\le\norm{E'-U}_{1,1}=\norm{E}_{1,1},\\
\left|\norm{E'}_{1,1}-\norm{E}_{1,1}\right|
&\le\norm{E'-E}_{1,1}=\norm{U}_{1,1}.
\end{aligned}
\label{eq:app-triangle}
\end{equation}
Division by $LD$ proves both inequalities in Eq. \ref{eq:closure-geometry}. In particular,
\begin{equation}
\mathcal D(Z,\Delta)-\mathcal R(Z)
\le\mathcal R(Z+\Delta)
\le\mathcal D(Z,\Delta)+\mathcal R(Z).
\label{eq:app-residual-bracket}
\end{equation}
If $\mathcal R(Z)\le\varepsilon_0$, the perturbed residual lies within $\varepsilon_0$ of the unexplained perturbation magnitude. If $\mathcal D(Z,\Delta)>2\mathcal R(Z)$, the lower bound is strictly larger than $\mathcal R(Z)$. Conversely, if $\mathcal D(Z,\Delta)\le\eta_0$, the second inequality in Eq. \ref{eq:closure-geometry} limits the change in residual magnitude to $\eta_0$, regardless of the size of $\Delta$ itself.

\paragraph{Linear special case.}
For $G^\star(Z)=ZW^\top$, the unexplained change becomes
\begin{equation}
U=\Delta(I-W^\top).
\label{eq:app-linear-case}
\end{equation}
Thus a perturbation satisfying $\Delta=\Delta W^\top$ leaves the residual unchanged. In the nonlinear case, the corresponding condition is the finite-increment equality $G^\star(Z+\Delta)-G^\star(Z)=\Delta$, not membership in a fixed linear subspace. These statements describe predictability under the learned operator, not a recovered causal graph or a universal definition of anomalous behavior.

\subsection{Coordinates, Endpoint Ratios, and Decision Scope}
\label{app:theory-scope}

\paragraph{Fixed standardized coordinates.}
The perturbations above act on $Z$, after preprocessing. If a raw window is perturbed and then standardized again, the relevant change is the \emph{induced} difference between the two standardized windows, not the raw additive perturbation. Channel-wise preprocessing preserves target exclusion from peer trajectories, but centering can remove an additive shift and rescaling can change its magnitude. Offline detrending also limits any claim about streaming detection delay.

\paragraph{The endpoint ratio is a different quantity.}
Suppress the window index and set $b=L^{-1}\sum_{\ell=1}^Lr_\ell$. For a second nonnegative residual profile $r'$, write $d_L=r'_L-r_L$ and $d_b=L^{-1}\sum_\ell(r'_\ell-r_\ell)$. The exact score difference is
\begin{equation}
\frac{r'_L}{b+d_b+\epsilon}-\frac{r_L}{b+\epsilon}
=\frac{d_L(b+\epsilon)-r_Ld_b}
{(b+\epsilon)(b+d_b+\epsilon)}.
\label{eq:app-ratio-change}
\end{equation}
Both denominators are positive. Therefore an endpoint-error increase raises the ratio only when $d_L(b+\epsilon)>r_Ld_b$; a simultaneous increase in the background can offset it. Since $b\ge r_L/L$,
\begin{equation}
0\le\frac{r_L}{b+\epsilon}<L,
\qquad
r_1=\cdots=r_L=c>0
\ \Longrightarrow\ 
\frac{r_L}{b+\epsilon}=\frac{c}{c+\epsilon}.
\label{eq:app-ratio-scope}
\end{equation}
A uniformly large residual therefore produces a ratio near one, not a score proportional to its absolute magnitude. This readout emphasizes local departures within a window and does not measure how long an anomaly persists.

\end{document}